\newcolumntype{Y}{>{\raggedleft\arraybackslash}X}
\newcommand*{\centerfloat}{%
  \parindent \z@
  \leftskip \z@ \@plus 1fil \@minus \textwidth
  \rightskip\leftskip
  \parfillskip \z@skip}
\newcommand{\cons}[1]{\texttt{#1}}
\newcommand{\Asn}{\ensuremath{\mathit{Asn}}}
\newcommand{\Con}{\ensuremath{\mathit{Con}}}
\newcommand{\Dom}{\ensuremath{\mathit{Dom}}}
\newcommand{\dom}{\ensuremath{\mathit{dom}}}
\newcommand{\sol}{\ensuremath{\mathit{sol}}}
\newcommand{\solve}{\ensuremath{\mathit{solve}}}
\newcommand{\Val}{\ensuremath{\mathit{Val}}}
\newcommand{\Var}{\ensuremath{\mathit{Var}}}
\begin{document}

\title{Half-checking propagators}
\author{Mikael Zayenz Lagerkvist\inst{1}\orcidID{0000-0003-2451-4834}
  \and \\ Magnus Rattfeldt\inst{2}\orcidID{0000-0001-5036-3107}}
\authorrunning{M. Z. Lagerkvist and M. Rattfeldt}
\institute{
  \email{research@zayenz.se}\\
  \url{https://zayenz.se}
  \and
  \email{research@rattfeldt.se}
}

\maketitle

\begin{abstract}
  Propagators are central to the success of constraint programming,
  that is contracting functions removing values proven not to be in
  any solution of a given constraint.
  The literature contains numerous propagation algorithms, for many
  different constraints, and common to all these propagation
  algorithms is the notion of correctness: only values that appear in
  \emph{no solution} to the respective constraint may be removed.

  In this paper \emph{half-checking propagators} are introduced, for
  which the only requirements are that identified solutions (by the
  propagators) are actual solutions (to the corresponding
  constraints), and that the propagators are contracting.
  In particular, a half-checking propagator may \emph{remove
    solutions} resulting in an incomplete solving process, but with
  the upside that (good) solutions may be found faster.
  Overall completeness can be obtained by running half-checking
  propagators as one component in a portfolio solving process.
  Half-checking propagators opens up a wider variety of techniques to
  be used when designing propagation algorithms, compared to what is
  currently available.

  A formal model for half-checking propagators is
  introduced, together with a detailed description of how to support
  such propagators in a constraint programming system.
  Three general directions for creating half-checking propagation
  algorithms are introduced, and used for designing new half-checking
  propagators for the \cons{cost-circuit} constraint as examples.
  The new propagators are implemented in the Gecode system.
\end{abstract}

\section{Introduction}
\label{sec:introduction}

Constraint programming has been successful in a wide variety of
settings, and central to the success of constraint programming is
the multitude of smart and efficient propagation algorithms
devised. Propagation is all about removing values that are not in any
solution to a constraint, and it is what separates constraint
programming from generate-and-test. In constraint programming, we are
justifiably proud of being able to effectively combine algorithms from
many different fields implemented as propagators, so that a model
effortlessly and without fear of adverse interactions can use
intelligent scheduling algorithms for \cons{disjunctive} and
\cons{cumulative} such as not-first/not-last and energetic reasoning,
dynamic programming algorithms for \cons{regular}, \cons{bin-packing},
and \cons{knapsack}, maximum flow reasoning for
\cons{global-cardinality}, arithmetic reasoning for arithmetic
constraints, and Boolean reasoning, among many more.

Unfortunately, designing good propagation algorithms is hard. It is
hard not only since the specific problems they model are hard, but
they are hard for a more fundamental reason. Propagators are required
to be \emph{correct}; they must never remove a value from a variable
that may still be a solution to the constraint. This means that
propagation is not actually concerned with finding a solution but about
proving that no solution exists for a certain variable-value pair,
which is a subjectively harder problem. The requirement for
correctness also means that there is an upper limit on the amount of
propagation that can be done, and this limit (\emph{domain
  consistency}~\cite{mackworth77}) is
often the ultimate goal when designing a new
propagator. Unfortunately, even if a propagator is domain
consistent it does not mean that it performs a high amount of
propagation: perhaps all values can still be part of some solution for
the constraint.

In this paper we propose a new type of propagators, that we call
\emph{half-checking propagators}. By relaxing the requirements of
propagators to a bare minimum for ensuring soundness (found solutions
must be constraint solutions), we open up for a wider variety of
techniques that may be used when designing propagation algorithms. On
the downside, such propagators are no longer correct, which means that
the overarching solving process is no longer complete. On the upside,
however, such propagators can deploy new and stronger reasoning
(possibly even stronger than domain consistency), with the hope that
the search is then guided towards promising parts of the search space.

In many industrial applications, finding a provably optimal solution
is not as interesting as finding solutions that improves the best
known result.  Local search is a typical example of an incomplete
method used for finding \emph{better} solutions, as are heuristics
and approximation algorithms.  In constraint programming, the perhaps
most well known and successful incomplete technique is
Large Neighborhood Search~\cite{lns}. In contrast to these incomplete
methods we embrace the incompleteness earlier by lifting it into the
propagators, the heart of a constraint programming solver. Similar to
all incomplete strategies, completeness can be regained by combining
one or more incomplete solution methods with one or more complete
solution methods in a portfolio solver.

\paragraph{Contributions.} This paper introduces the novel concept of
\emph{half-checking propagators}, including a formal model, a full
exploration on how to integrate into a realistic system, and how to
use in a portfolio solver. Three general techniques for designing
half-checking propagators are defined. For all three, an example
propagator using the technique is developed for the
\cons{cost-circuit} constraint. An implementation in an
industrial-strength constraint programming system (Gecode) has been
made verifying the approach. 

\paragraph{Plan of paper.} In the next section an overview of
constraint programming is given. Sect.~\ref{sec:hf-propagators} gives
a formal model for half-checking propagators, and the next Section
describes the practical aspects of integrating half-checking
propagators in a realistic system. Sect~\ref{sec:tsp} gives a
background on the TSP problem used in the examples, and
Sections~\ref{sec:hf-examples:dominating}
to~\ref{sec:hf-examples:solutions} introduce three techniques for
defining half-checking propagators, with examples using TSP. Some
experimental evaluation is reported in
Section~\ref{sec:evaluation}. Finally, related work is presented and
then our results are summarized in the conclusions.

\section{Constraint programming}
\label{sec:cp}

In order to be clear about the specifics, a formal model of constraint
programming is needed, as is knowing the standard requirements on
propagators.

Let $\mathcal{P}(s)$ be the \emph{power-set} of $s$,
that is the set of all subsets of $s$. The set of all functions from
the set $A$ to the set $B$ is denoted $A\to B$. Let $\lambda x.E$ be the
function from the argument $x$ to the expression $E$. 

\subsection{Constraint satisfaction problems}
\label{sec:cp::formal}

A constraint satisfaction problem is defined over a finite set of
\emph{variables} $\Var=\{x_1,\ldots,x_n\}$ and a finite set of
\emph{values} $\Val$. An \emph{assignment} $a\in \Asn$
maps each variable in $\Var$ to a value in $\Val$, $\Asn=\Var\to \Val$. For a set of
variables $x\subseteq \Var$, $\Asn_x$ is the assignments where the
arguments are restricted to $x$, and $a_x$ is similarly an assignment restricted to
$x$. A \emph{constraint} $c\in \Con$ over variables
$var(c)\subseteq \Var$ is defined as the set of assignments that are
solutions to that constraint:
$\Con = \cup_{x\subseteq\Var}\mathcal{P}(\Asn_x)$.
When necessary and without loss of generality, any constraint is
extended to all variables $\Var$ by allowing all combinations of
values for the added variables, for all solutions.

A \emph{domain} $d\in \Dom$ maps each variable to a subset of the
values, $\Dom=\Var\to \mathcal{P}(\Val)$. For simplicity, all domains
where at least one variable is mapped to the empty set are equated and
represented by the fully empty domain ($\bot=\lambda x. \{\}$). A
domain $d$ induces a set of assignments
($asn(d)=\{a\ |\ \forall x.\, a(x) \in d(x)\}$), and can thus be
considered as a constraint. Domains are ordered and behave similar to
sets by lifting the operations and relations point-wise over the
variables, and is extended to include constraints and assignments
using the induced constraint for the domain.

The domain of a constraint is defined as
$\dom(c)=\lambda x. \{v\ |\ \exists a\in c.\, a(x) = v\}$. Note that
the domain of a constraint in turn induces a much weaker constraint
than the original. For example, the equality constraint $eq$ for two
variables contains just $|\!\Val|$ assignments, while $asn(\dom(eq))$
contains all $|\!\Val|^2$ assignments.

A constraint satisfaction problem (\emph{CSP}) is a tuple
$\langle d, C \rangle$ of a domain $d$ and a set of constraints
$C$. An assignment $a$ is a \emph{solution} to a CSP iff $a\in d$ and
$\forall c\in C.\, a\in c$. The set of all solutions to a CSP $csp$ is
given by the function $\sol(csp)$. A function
$\solve\in CSP\to \mathcal{P}(\Asn)$ finds solutions for a CSP. Such
a function is \emph{sound} iff $\solve(csp)\subseteq sol(csp)$ (all
solutions found are actually solution). It is \emph{complete} iff
$\solve(csp)=\sol(csp)$ (solving finds all solutions).

\subsection{Propagators and models}
\label{sec:cp:propagators}

A \emph{propagator} $p$ for a constraint $c$ is a function\footnote{As
  remarked in~\cite{SchulteTack:CP:2009}, propagators do not need to be
  functions, and can be arbitrary relations in $\Dom\times \Dom$,
  e.g., as a model for randomized propagation. For ease of explanation
  and notation, we use functions as the terminology, and leave
  generalization unstated.} from domains to domains ($p\in\Dom\to\Dom$), with the
following properties.

\begin{description}
\item[Contracting] For all propagators $p$ and domains $d$,
$p(d)\subseteq d$ must hold. 
\item[Local] For all $d\in \Dom$, if $x\not\in var(c)$, then $p(d)(x) = d(x)$.
\item[Checking] For all $a\in \Asn$, $p(dom(\{a\}))=dom(\{a\})$
  iff $a\in c$.
\item[Weakly monotonic] For all $d\in \Dom$ and assignments $a\in d$,
  $p(\{a\})\subseteq p(d)$.
\end{description}

\emph{Contracting} means that a propagator only removes values from
domains, never adds values. \emph{Local} means that a propagator only
removes values from the variables involved in the
constraint. \emph{Checking} means that a propagator recognizes all
solutions to a constraint since no values are removed for those
assignments. \emph{Weakly monotonic} means that if an assignment is a
fix-point of a propagator (and thus a solution to the constraint),
then the propagator does not remove that assignment from a domain it
is in.

Correctness is a crucial property for propagators. it means that no
solution is removed by running a propagator. Any propagator that is
weakly monotonic and checking is correct for its
constraint~\cite{SchulteTack:CP:2009}.
\begin{definition}[Correct] \label{def:correct} A propagator $p$ is
  correct for constraint $c$, iff
  $$\forall a\in c.\, \forall d\in \Dom.\, a\in asn(d)\implies a\in
  asn(p(d))$$
\end{definition}

Let the constraint of a propagator $p$ be referred to as $c_p$. A 
\emph{constraint model} is a combination of a domain and a set of propagators
$\langle d, P \rangle$. This is very similar to a CSP as defined
above, and a model can be transformed to a CSP using
$csp(\langle d, P\rangle)=\langle d, \{c_p |\forall p \in P\}\rangle$. The crucial
difference is that a constraint model can define constraints in
intension, instead of the extensional full set of solutions in a
CSP. Another view is that the CSP defines the semantics, and a model
defines how to compute solutions to a problem.

Solving a model is typically done by interleaving fix-point
computation of the propagators with search using heuristic
decomposition of the model (branching or labeling). We leave the
details of solving opaque for now, assuming a function $\solve$ for
CSPs where $solve(\langle d, P\rangle)$ returns all solutions that are
fix-points of all propagators.

In~\cite{SchulteTack:CP:2009}, Schulte and Tack introduced weak
monotonicity and showed that the above properties for
propagators\footnote{The \emph{local} property was not needed there,
  as their constraints and propagators are defined over all variables.} are
necessary and sufficient to get \emph{sound} and \emph{complete}
solving when combined with search; when solving a model all solutions
found are solutions for all the constraints, and all solutions that
satisfy the constraint are found. It is common to require monotonicity
from propagators ($\forall d_1,d_2\in\Dom.\, d_1\subseteq d_2\implies
p(d_1)\subseteq p(d_2)$), but this does not model actual propagators
well, since it excludes many types of random and heuristic
propagators. The gain from having monotonic propagators is that the
fix-point of all the propagators is unique, regardless of the order of
propagators run.

In practice, a single constraint may be implemented by a set of
propagators, such as $n^2$ not equals propagators for an
\cons{all\_different} constraint. We will leave this generalization out
of the formalization, but note that it is straightforward.

Propagators are often characterized on their propagation
\emph{strength}.  Given two propagators $p_1$ and $p_2$ for a
constraint $c$, $p_1$ is \emph{stronger} than $p_2$ iff for all
domains $d$, $p_1(d)\subseteq p_2(d)$, and for some domain $d'$,
$p_1(d')\subset p_2(d')$. A \emph{consistency level} defines a
specific strength of propagation. The canonical example is
\emph{domain consistency} (also called (generalized) arc consistency,
or complete propagation), where a propagator $p$ is domain consistent
iff $\forall d\in \Dom. p(d) = dom(asn(d) \cap c_p)$. That is, the
propagator removes all values for variables that have no supporting
assignment in the associated constraint. Domain consistency is
interesting since it is the strongest consistency possible, without
violating the requirements for a propagator.  There are other
consistency levels defined in the literature, for example \emph{value
consistency} (also called forward checking), and \emph{bound
consistency}.

\subsection{Constraint programming systems}
\label{sec:cp:systems}

Constraint programming systems are designed to enable the
specification and solving of constraint models.  Typical examples
include open source solvers such as Gecode~\cite{gecode},
Choco~\cite{choco}, and OR Tools~\cite{ortools} and commercial solvers
such as SICStus Prolog~\cite{sicstus} and CP
Optimizer~\cite{cpoptimizer}.

Constraint programming systems contain implementations for
\begin{description}
\item[Variables] Variables can be Booleans, integers, floats, sets,
  and so on. 
\item[Propagators] Propagators are the implementations of
  constraints. Systems typically provide many different propagators,
  for many different constraints.
\item[Branching] A branching is an implementation of a heuristic, that
  decides how to make guesses in a search tree. 
\item[Search] Search is used to find solutions to models comprised of
  variables and propagators combined with branchings. Search methods
  can be complete (DFS, Limited Discrepancy Search) or incomplete
  (Restart based search, LNS), and can be for solutions only or
  finding optimal values.
\end{description}

For implementing search, systems need to provide support for state
restoration~\cite{Reischuk09}. The two main types are trailing and
copying + recomputation. Trailing involves keeping a trail that
encodes undo-information, so that when backtracking in a search tree the
changes along a path can be undone. Copying and recomputation works by
keeping a list of \emph{redo} information, typically the branching
decisions taken, combined with regular check-pointing of the state
using copies.

\section{Half-checking propagators}
\label{sec:hf-propagators}

A \emph{half-checking} propagator is similar to a traditional
propagator, only with less restrictions. In particular, half-checking
propagators are allowed to actually \emph{remove solutions}.  A
half-checking propagator is a propagator that only requires that if a
solution is detected, then it is correct. Formally, a half-checking
propagator is a function from domains to domains, with the properties
that it is \emph{local} and \emph{contracting}, in addition to the
following property:

\begin{definition}[Half-checking]
  The propagator $p$ is \emph{half-checking} for $c$, if for all
  assignments $a\in\Asn$, if $p(\dom(\{a\}))=\dom(\{a\})$ then
  $a\in c$.
\end{definition}

Half-checking is a natural weakening of \emph{checking}, where instead
of requiring that all solutions to a constraint are precisely
identified and thus the only fix-points of the function, we only
require that fix-points of assignments must be solutions to the
constraint. Also importantly, a half-checking propagator is not
required to be \emph{weakly monotonic} either. Since weak monotonicity
is required for correctness,
a half-checking propagator may actually be \emph{incorrect}: it may
remove an assignment that it would recognize as a solution.

\begin{example}
  The \emph{fail} propagator $\lambda d.\bot$ is a half-checking
  propagator for all constraints $c\in\Con$. Since \emph{fail} has no
  fix-points for any assignment, it is trivially half-checking. It is
  naturally contracting, as well as local, since all empty/failed
  domains are equated. Note that the \emph{fail} propagator is the
  strongest propagator possible, since
  $\forall d\in \Dom.\, \bot\subseteq d$. (Note also that \emph{fail} is
  a rather useless propagator in practice, since it guarantees that no
  solution will be found.)
  \label{ex:fail}
\end{example}

\begin{theorem}
  All propagators are also half-checking propagators.
\end{theorem}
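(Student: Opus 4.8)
The plan is to show that the three defining properties of a half-checking propagator --- \emph{local}, \emph{contracting}, and \emph{half-checking} --- are all logically implied by the four defining properties of an ordinary propagator, so that any $p$ satisfying the propagator axioms automatically satisfies the half-checking axioms.

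Two of the three inclusions are immediate: \emph{local} and \emph{contracting} are among the four properties required of an ordinary propagator verbatim, so nothing needs to be done there beyond pointing this out. The only property that requires an argument is \emph{half-checking}, and here the key observation is that it is a strict one-directional weakening of \emph{checking}. The \emph{checking} property states, for every assignment $a$, the biconditional $p(\dom(\{a\})) = \dom(\{a\}) \iff a \in c$. The \emph{half-checking} property asks only for the forward direction of that biconditional: $p(\dom(\{a\})) = \dom(\{a\}) \implies a \in c$. So the step is simply: fix an arbitrary $a \in \Asn$, assume $p(\dom(\{a\})) = \dom(\{a\})$, and apply the ``only if'' half of \emph{checking} to conclude $a \in c$. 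Since $a$ was arbitrary, $p$ is half-checking for $c$.

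There is essentially no obstacle here; the statement is a bookkeeping remark that establishes half-checking propagators as a genuine generalization rather than an orthogonal notion. The one subtlety worth a sentence is notational consistency: the propagator properties in the excerpt are written with $dom(\{a\})$ (lowercase, and it should be read as the singleton-assignment domain $\dom(\{a\})$), while the half-checking definition uses $\dom(\{a\})$; these denote the same object, so no care beyond remarking on it is needed. It is also worth noting in passing that a half-checking propagator need \emph{not} be weakly monotonic, which is exactly why the converse of this theorem fails --- the \emph{fail} propagator of Example~\ref{ex:fail} is half-checking but is not a propagator --- so the inclusion is proper.

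In summary, I would (i) recall that \emph{contracting} and \emph{local} are shared requirements, so they transfer trivially; (ii) derive \emph{half-checking} from \emph{checking} by discarding the ``if'' direction of the biconditional; and (iii) optionally remark that the inclusion is strict via \emph{fail}. The whole proof is two or three lines.
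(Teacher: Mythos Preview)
Your proposal is correct and matches the paper's approach exactly: the paper's proof is the single sentence ``This follows directly since \emph{half-checking} is a weakening of \emph{checking},'' which is precisely your observation that the forward implication of the biconditional in \emph{checking} yields \emph{half-checking}, with \emph{local} and \emph{contracting} carried over verbatim. Your additional remarks on notation and on strictness via the \emph{fail} propagator are fine elaborations but not needed for the theorem itself.
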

\begin{proof}
This follows directly since \emph{half-checking} is a weakening of
 \emph{checking}.  
\end{proof}

\begin{theorem}
  Solving a constraint model with half-checking propagators using
  $\solve$ is \emph{sound}.
\end{theorem}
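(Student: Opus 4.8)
The plan is to unfold the two definitions in play---soundness and the behaviour of $\solve$ on a model---and then apply the \emph{half-checking} property once for each propagator. Recall from Section~\ref{sec:cp:propagators} that $\solve(\langle d, P\rangle)$ returns exactly those assignments $a\in asn(d)$ that are fix-points of every propagator in $P$, i.e.\ with $p(\dom(\{a\}))=\dom(\{a\})$ for all $p\in P$. The associated CSP is $csp(\langle d, P\rangle)=\langle d, \{c_p\mid p\in P\}\rangle$, so soundness is exactly the statement that $\solve(\langle d, P\rangle)\subseteq \sol(csp(\langle d, P\rangle))$: every returned $a$ must satisfy $a\in asn(d)$ and $a\in c_p$ for all $p\in P$.

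First I would fix an arbitrary $a\in\solve(\langle d, P\rangle)$ and an arbitrary $p\in P$. By the characterisation of $\solve$ just recalled we have $a\in asn(d)$ and $p(\dom(\{a\}))=\dom(\{a\})$. The key---and essentially only---step is then to invoke the \emph{half-checking} property of $p$ for $c_p$, which states precisely that $p(\dom(\{a\}))=\dom(\{a\})$ implies $a\in c_p$; hence $a\in c_p$. Since $p$ was arbitrary, $a$ satisfies every constraint of the derived CSP, and together with $a\in asn(d)$ this gives $a\in\sol(csp(\langle d, P\rangle))$. Taking the union over all such $a$ yields $\solve(\langle d, P\rangle)\subseteq\sol(csp(\langle d, P\rangle))$, which is the definition of soundness.

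It is worth flagging where the argument deliberately stops: the reverse inclusion (completeness) would additionally need the propagators to be \emph{weakly monotonic}, so that no solution of any constraint is ever pruned during propagation, and half-checking propagators are explicitly not required to be weakly monotonic---that is exactly the price paid for the extra design freedom. The \emph{contracting} and \emph{local} properties, while needed for the search to be well-behaved (termination, and the fact that propagation only ever shrinks domains), are not used in the soundness argument itself.

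The one genuinely delicate point is that $\solve$ is intentionally left opaque. The whole proof rests on the stated assumption that $\solve(\langle d, P\rangle)$ returns only assignments that are consistent with $d$ and are fix-points of every propagator in $P$; any faithful propagation-plus-search implementation satisfies this, since intermediate propagation is used purely to prune the search tree and a candidate assignment is accepted only after it has been checked to be stable under all propagators. Once that characterisation is granted, nothing about half-checking propagators beyond the \emph{half-checking} definition is needed, so I expect the actual proof to be a one-line appeal to that definition rather than anything requiring real work.
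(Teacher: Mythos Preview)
Your proposal is correct and follows exactly the paper's own argument: returned assignments are, by the assumed behaviour of $\solve$, fix-points of every propagator, and the half-checking property then forces each such assignment to lie in the corresponding constraint. The paper's proof is the two-sentence version of what you wrote; your additional remarks on completeness, on which propagator properties are actually used, and on the opacity of $\solve$ are accurate but not needed for the theorem itself.
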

\begin{proof}
  All returned solutions from $solve$ must be fix-points for all the
  propagators (by definition, whether traditional or
  half-checking). Since the only fix-points of both traditional and
  half-checking propagators are solutions to the associated
  constraint, the returned assignments are solutions to the model.
\end{proof}

\begin{theorem}
  Solving a constraint model with half-checking propagators using
  $solve$ is \emph{not complete}.
\end{theorem}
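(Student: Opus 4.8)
The plan is to disprove completeness by exhibiting a single constraint model on which $\solve$ returns strictly fewer solutions than the induced CSP actually has. The obvious witness is the \emph{fail} propagator of Example~\ref{ex:fail}.

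First I would fix any \emph{satisfiable} constraint, for instance the equality constraint $eq$ over two variables $x_1,x_2$, and pick an assignment $a\in eq$ (so $a(x_1)=a(x_2)$). Then I would form the model $M=\langle d,\{p\}\rangle$, where $p=\lambda d.\bot$ is the fail propagator — which by Example~\ref{ex:fail} is half-checking for $eq$, so $c_p=eq$ — and $d$ is any domain with $a\in asn(d)$, e.g.\ $d=\lambda x.\Val$.

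Next I would evaluate both sides of the completeness equation. On the CSP side, $csp(M)=\langle d,\{eq\}\rangle$ and $\sol(csp(M))=asn(d)\cap eq\ni a$, hence is nonempty. On the solver side, $\solve(csp(M))$ returns, by the stated convention for $\solve$ on models, exactly the assignments in $d$ that are fix-points of every propagator; but $p(\dom(\{a'\}))=\bot\neq\dom(\{a'\})$ for every assignment $a'$, so there are no such fix-points and $\solve(csp(M))=\emptyset$. Therefore $\solve(csp(M))\neq\sol(csp(M))$, contradicting the definition of completeness.

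I do not expect a real obstacle. The only thing to verify is that \emph{fail} legitimately serves as a half-checking propagator for a satisfiable constraint, and Example~\ref{ex:fail} already grants that \emph{fail} is half-checking (and contracting and local) for \emph{every} constraint. In effect the theorem is an immediate corollary of that example: a model whose only propagator is \emph{fail} exhibits no assignment as a fix-point, so no solution is ever reported even though the underlying CSP may well be satisfiable. If a less degenerate witness were preferred, one could instead take a half-checking propagator that prunes exactly one genuine solution of an otherwise correct constraint and argue analogously.
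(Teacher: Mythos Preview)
Your argument is correct and follows essentially the same approach as the paper: both use the \emph{fail} propagator from Example~\ref{ex:fail} as the witness, observing that a satisfiable model containing \emph{fail} has no assignment fix-points and hence $\solve$ returns $\emptyset$ while the underlying CSP has solutions. The paper phrases it slightly more abstractly (replace any propagator in a satisfiable model by \emph{fail}), whereas you instantiate with the concrete constraint $eq$, but the content is the same.
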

\begin{proof}
  Given is a model $\langle d, P\rangle$ with at least one
  solution. We can replace any propagator $p$ in $P$ with \emph{fail}
  from Example~\ref{ex:fail} as a half-checking propagator for the
  constraint $c_p$. With \emph{fail} in the set of propagators, no
  solutions are produced since there are no assignment fix-points for
  \emph{fail}.
\end{proof}

\section{Integrating half-checking propagators into a system}
\label{sec:integration}

After defining and describing the theoretical properties of
half-checking propagators, it is important to investigate how they can
be supported in constraint programming systems.  In most constraint
programming systems, propagators are just components that interact
with the current variables, and based on deductions may remove some
values from its variables domains. 

When implementing a propagator in a typical constraint programming system,
the properties \emph{contracting} and \emph{local}  are natural
consequences of the programming interface: propagators only have
access to their variables, and the only modifications that a
propagator can do are removal of values from domains.

As shown in~\cite{SchulteTack:CP:2009}, a constraint programming
system that uses re-computation may need to make adjustments for
weakly-monotonic propagators as opposed to monotonic propagators. The
reason is that running propagation twice may not give the exact same
result, since the fix-point is no longer unique. Typical examples of
this might be propagators that use randomized algorithms. The same situation
naturally applies for half-checking propagators, and thus if the
system is set up such that it can handle weakly-monotonic propagators,
it can also handle half-checking propagators.

In addition to supporting half-checking propagators, there are
additional practical concerns that need to be taken into account. When
applicable, we will describe how this is done for the Gecode system.

\subsection{Portfolio-based search}
\label{sec:portfolio}

Using half-checking propagators naturally leads to a incomplete
search. In many cases, this may be ok and a desired outcome, but
sometimes a user would like to know that all solutions have been
found, that no solution exists, or that the optimal solution has been
found. Using a cooperative portfolio solver combining an incomplete
search with a complete search solves this, such as in the Failure
Directed Search~\cite{failure-directed-search} used in the CP
Optimizer~\cite{cpoptimizer} system. Portfolios of
solvers, with some assets incomplete, for scheduling problems is
explored in~\cite{Fontaine16}.

It is important to indicate to the portfolio system used that the
asset with half-checking propagators is not a complete search
method. If it is not possible to inform the system that an asset is
incomplete, the resulting combined search may wrongly indicate that it
is complete. In Gecode, returning false from the function called to
set up the asset indicates that the asset is incomplete.

Given many half-checking propagators, there are three main ways in which
they can be used together in a portfolio system.
\begin{description}
\item[Combined] Half-checking propagators can naturally be combined
\item[Multiple assets] For each half-checking propagator, create an
  asset in the portfolio that runs the problem with it. This may
  require creating many assets.
\item[Round robin] To avoid too many assets, a single asset can be
  used with a round-robin schedule that upon re-start switches between
  the different half-checking propagators to use.
\end{description}
Which strategy to use will depend on the problem at hand, the
half-checking propagators, and the instances to solve. For any
particular problem, it will require experimentation combined with
experience in the behaviour of the half-checking propagators in question.

\subsection{No-good recording}
\label{sec:no-good}

A crucial aspect for modern re-starting search is to record
no-goods~\cite{Katsirelos05,Lee16}. A no-good is a constraint that
describes the search-tree that has been explored so far, and is added
upon re-start. In constraint programming, no-goods are typically based
on negating the conjunction of a set of branching decisions. When combined with traditional
constraint propagation for monotonic propagators, branching decisions
precisely describe the explored part of a search tree. For
weakly-monotonic propagators, the search-tree may not be precisely
described by the no-good, but it is still correct. 

In the presence of half-checking propagators, the parts of a
search-tree that have been visited may contain solutions that were
removed. Thus, a no-good from a
search using half-checking propagators \emph{is not globally valid}. It is
still useful in the search using that half-checking propagator, but if
it is used in an asset that claims to be complete, this will no
longer be true.

Consider again the \emph{fail} propagator from
Example~\ref{ex:fail}. Given a portfolio search with one asset a traditional
and complete search, and one asset using \emph{fail}. As soon as the
latter is run it will fail and be done. Recording the no-good and
posting it in the traditional asset will abort the search since the no-good
would rule out the whole search tree.

\subsection{Lazy clause generation}
\label{sec:lazy}

In lazy clause generation solvers~\cite{lcg}, a
propagator \emph{explains} its deductions using clauses. There is
nothing inherently problematic about combining half-checking
propagators and lazy clause generation. One interesting aspect, is
that a simple half-checking propagator that does some very mild extra
deductions may produce clauses that are later on used in the
no-good explanation clauses generated on failure, and may thus end up
being used in a wider context.

For some half-checking propagators, such as the removal of crossing
edges described in Section~\ref{sec:hf-examples:dominating},
generating good explanations is easy. For others, such as the
approximation based upper bound computation in
Section~\ref{sec:hf-examples:bounds}, useful explanations can be
generated if the approximation produces a witness solution. However,
for some half-checking propagators such as the heuristic based filtering in
Section~\ref{sec:hf-examples:solutions}, explanations may be quite
hard to produce.

\subsection{Testing of propagators}
\label{sec:testing}

Propagators are complicated pieces of code, and testing is naturally
needed to increase the confidence that a constraint programming
system produces the correct results.  Unfortunately, half-checking
propagators make the job of testing harder, since there are fewer
guarantees that we can rely on.

Testing in the Gecode system is based on a kind of test oracles using
a set-up that combines initial domains with a constraint checker. A
constraint checker is typically a much simpler piece of code to write
than the propagator under test. For all assignments in the initial
domains, the testing system then removes values towards that
assignment, running the propagator under test intermittently. If the
assignment is in the constraint/validated by the check, the propagator
should not remove the assignment, and otherwise the search should
eventually fail. The whole idea relies on weak monotonicity, which
half-checking propagators do not have. In addition, propagators may
opt-in for extended checking of bounds and domain consistency, neither
of which are useful to a half-checking propagator.

In~\cite{ozgur18} metamorphic testing is used to test constraint
propagators. The idea is to use an extensional constraint with a table
propagator as a
validation propagator. A test consists of running original propagator
and the validation propagator, and then comparing the resulting search
trees. Again, the fact that a propagator must be weakly monotonic and
checking are crucial properties here. 

A similar idea
is explored in
SolverCheck~\cite{solvercheck}: initial domains and a constraint
checker are used to generate a list of valid assignments. These
assignments are then used to build reference propagators, including
weakening them to build bounds-consistent propagators. Propagation of
the propagator under test is compared with the simple
reference propagator. Again the assumption is naturally that
propagators are correct, and will not remove solutions.

Since half-checking propagators are allowed to remove solutions, none
of the above testing strategies will work.  However, there are some
things that we could test for, namely the half-checking
property. Using the Gecode testing strategy, it is possible to adjust
it to only check that a solution accepted by the propagator was also
verified by the checker as being valid.

In Section~\ref{sec:hf-examples:bounds} half-checking propagators that
update bounds based on approximations are described. These may use
inferences that are always valid for optimal solutions. Thus, by only
considering optimal assignments in a Gecode-style testing set-up, the
propagator \emph{can} be tested for optimal assignments in the
traditional manner.

Naturally, many half-checking propagators may use standard
algorithms, and these can of course be tested using any normal kind of
testing framework.

\section{The \cons{cost-circuit} constraint and TSP}
\label{sec:tsp}

In the following sections, examples of general techniques and
strategies to use when implementing half-checking propagators are
given. For each one, an algorithm is proposed for the \cons{cost-circuit}
constraint. This section describes the
constraint and the Travelling Salesperson Problem that it is used for.

\subsection{Theory}
\label{sec:tsp-theory}

Let $G=\langle V,E \rangle$ be a graph consisting of a set of vertices
or nodes $V$ and a set of edges $E\subseteq V\times V$ indicating
which edges are connected. The \emph{degree} of a node is the number
of edges connected to it. The graph is \emph{complete} if
$E= V\times V$, i.e., all nodes are connected to all other nodes (the
degree of each node is $|V|-1$). The
graph may be \emph{directed} or \emph{undirected}. A \emph{path} of
length $k$ in a graph is a sequence of nodes
$\langle v_1, v_2, \ldots v_k \rangle$ where
$\forall_{i\in 1\ldots k-1}.\, \langle v_i,v_{i+1}\rangle\in E$. A path is
a \emph{circuit} when $\langle v_k,v_1\rangle\in E$. When all nodes
are unique it is called a \emph{simple path} and a \emph{cycle} or a
\emph{simple circuit}. When a simple path or a simple circuit covers
all the nodes ($k=|V|$), it is called Hamiltonian, and finding such
 are one of the classical NP-complete problems~\cite{Karp1972}. A
graph is \emph{connected} when there exists a path between all pairs
of nodes. A \emph{tree} is a graph that is connected and has no
cycles.  A \emph{weight function} $w$ is a function from edges to real
numbers ($w\in E\to \mathbb{R}$), and most often to non-negative real
numbers. It is \emph{symmetric} if
$\forall_{v_1,v_2\in V} w(\langle v_1,v_2\rangle) = w(\langle
v_2,v_1\rangle)$. A weight function \emph{respects the triangle
  inequality} when
$\forall_{v_1,v_2,v_3\in V} w(\langle v_1,v_3\rangle )\leq w(\langle
v_1,v_2\rangle ) + w(\langle v_2,v_3\rangle )$. Given a graph
$G=\langle V, E\rangle$ and a weight function $w$, a \emph{minimum spanning
tree} (MST) $M=\langle V, T\rangle$ is a tree with the same nodes as the
graph, with $T\subseteq E$, and with a minimum weight.

The \emph{Travelling Salesperson Problem} (TSP) is the problem of
given a graph $G=\langle V,E\rangle$ and a weight function $w$, find a
Hamiltonian circuit for the graph with minimum weight. This is the
natural weighted extension of the Hamiltonian path problem. It is
common to require that the graph for a TSP is complete; a missing edge
can be modelled as an arbitrary large weight, and using bounds on
weights to check feasibility. If the nodes of the graph have positions
and the weight is defined as the distance between the nodes, it is a
\emph{Euclidean} TSP. The TSPLIB~\cite{tsplib} is a collection of 110
challenging real-world TSP instances, with 77 of these using Euclidean
2D-distance.

\subsection{TSP in constraint programming}
\label{sec:tsp-cp}

The $\cons{circuit}(S)$ constraint models the Hamiltonian circuit problem
using an array of successor variables $S$, where $S_i = j$ indicates
that $j$ is the successor of $i$ in the circuit. The
$\cons{cost-circuit}(S,w,c)$ is the same, with the variable $c$
representing the total cost of the circuit according to the weight
function $w$.

The \cons{circuit} constraint is one of the classical global
constraints in constraint programming~\cite{Lauriere78,Beldiceanu94}. Since the base problem is
NP-complete, filtering algorithms are focused on effective but not
complete filtering. The base filtering is handled by the embedded implied
$\cons{alldifferent}(S)$, with additional removal of edges that would lead to
circuits smaller than $|S|$ (subtour elimination). In addition, many other structural filters
have been identified and
propagated (e.g.,~\cite{SchulteTack:CP:2009,Francis14}).  For
the weighted variant, there have been recent advances above the basic
filtering, for example in~\cite{Benchimol12}
and~\cite{Isoart19}.

The above propagation algorithms are all limited by the fact that no
correct value may be removed. State of the art TSP solvers such as
Concorde~\cite{concorde} can do more, since the goal is to find a
single optimal solution, not all possible solutions.

In constraint programming, the choice of the branching heuristic is
key. For TSP, several different heuristics have been
proposed~\cite{Fages16,Isoart19}, with no clear
winner. Here, we will focus on the \emph{Warnsdorff}
heuristic~\cite{wan1823} for the Knights tour problem (and more
generally, the Hamiltonian path problem). The heuristic is, when cast
in constraint programming terms, comprised of two
parts. The first is the variable ordering, assigning variables along a
path that is built up incrementally. The second is the value ordering,
preferring to go to nodes with the lowest out-degree. Adjusted for the
case of complete graphs with distances, the out-degree is less
important and using the minimum distance becomes more important.

\section{Technique: Dominating solutions}
\label{sec:hf-examples:dominating}

When solving a constraint programming problem, it is common to see
that one solution may \emph{dominate} another solution, either because
of symmetries or because of one solution having better
cost. Propagation for symmetries is common~\cite{matrixsymmetry}, as
is more global views for symmetry breaking~\cite{LDSB}. For
cost-dominating solutions, there is less opportunities for
incorporating the domination relation into propagators, since it is
typically quite specialized and will not behave as a traditional
propagator. This is a clear opportunity to apply half-checking
propagators.

\subsection{No Crossing Lines}
\label{sec:ncl}

In a pure Euclidean TSP over a complete graph with no side-constraints, a
property that always holds is that in a optimal solution there are no
crossing lines: given two crossing lines
$\langle s_1,e_1 \rangle$ and $\langle s_2,e_2 \rangle$, they can be
replaced with $\langle s_1,e_2 \rangle$ and $\langle s_2,e_1 \rangle$,
which will have the same or lower weight.  Thus, any solution that
contains crossing lines will be dominated by a solution in which the
crossing lines are un-crossed. For an edge $e$, let
$cl(e)\subset E$ be the set of lines that cross it.

Using this observation, we can design our first interesting
half-checking propagators, which we call $\cons{ncl}(S)$ for No
Crossing Lines. The key
observation is that given an assignment that includes an edge $e$ in
the solution, we known that in no \emph{optimal} solution where $e$ is used
(if any such exist), are any of the lines in $cl(e)$ used. Note that
there may be no optimal solution including the edge $e$.
Given an assignment $S_i = j$, for all edges
$\langle k,l \rangle\in cl(\langle S_i,S_j \rangle)$, propagate
$S_k \neq l$.

For a solution that uses Warnsdorff's rule for variable selection, it
is possible to choose a simpler filtering called
$\cons{ncl-warn}(S,f)$. The propagator follows the Warnsdorff path
from the starting node $f$ to the last known node in the
path, and removes any outgoing edges from that node that cross the fixed path.

Stronger reasoning using crossing lines is also possible. For a
variable $S_i$ with domain $d_{S_i}$, any edge $\langle
k,l \rangle\in\cap_{v\in d_{S_i}} cl(\langle S_i, v \rangle)$ can be
removed. We have not implemented this stronger propagation. 

\label{sec:ncl-impl}

\paragraph{Implementation.} Implementing \cons{ncl} requires a fast and efficient look-up of the
$cl$ sets. Since the graph is fixed, we pre-compute this information.
In a complete TSP with $n$ nodes, the number of edges is $n^2$, which
means that the number of crossing lines is $O\left(n^4\right)$, a very large
number for a modest number of cities. Thus, the propagator can only be
used for quite small instances.

For \cons{ncl-warn}, the crossing lines are computed on the fly
instead. Along the Warnsdorff path, $n$ assignments will be made, and
for each assignment $O(n)$ other edges need to be considered. Thus,
along a path a maximum of $O(n^2)$ pairs of edges are considered. This
is much less taxing than the full \cons{ncl} propagation.

To speed up the computation of the crossing lines, a spatial
index was used to make geometric look-ups. Our index is based on the
STR construction of R-trees~\cite{STR}. We adjusted it in two
ways. The first is to make binary sub-divisions recursively. The
second is to first sort objects based on width/height, and then on
position. This strategy is useful since many edges are very long and
cover most of the other edges. Using this ordering instead of the
normal STR ordering gave a small but significant speed-up.

\section{Technique: Heuristic bounds}
\label{sec:hf-examples:bounds}

For many hard problems in computer science, there are algorithms
defined that create good but not provably optimal solutions. Such
algorithms are often constructive, meaning that they produce a witness
solution showing how to achieve the bound.

Bounds are typically used in constraint programming propagators for
the worst case, i.e., finding the lowest and the highest weight
possible. The difference here is that we instead strive to give good
and tight upper-bounds based on a best-effort to find a solution to a
single constraint. Naturally, such bounds may be invalid in the
presence of other constraints in the model, but if they are valid,
they will help guide propagation.

\subsection{Christofides bounds propagation}
\label{sec:christofides}

The classical approximation algorithm for Euclidean TSP is
Christofides algorithm~\cite{christofides}.  The algorithm is defined
for a complete graph $G=\langle V, E \rangle$ with Euclidean weights
$w$, and the outline is the following. 

\begin{itemize}
\item Find a minimum spanning tree of $G$, $M$.
\item Let $O$ be the set of edges with odd degree in $M$.
\item Find a minimum weight complete matching in $G$ among the nodes in $O$,
  and add these edges to $M$.
\item Construct an Euler circuit in $M$ (a circuit that crosses each
  edge once). Guaranteed to exist since all nodes have even degree.
\item Following the Euler circuit, skip any node that has been used
  before with the corresponding edge in $E$.
\end{itemize}

The resulting circuit is at most 1.5 times the length of the optimal
circuit. Note that the algorithm requires that the graph is complete.
The Christofides algorithm is very popular as a reasonably simple
algorithm that gives a good bound. For example, it is implemented as a
stand-alone TSP solver in OR Tools~\cite{ortools}.

We propose the $\cons{cbp}(S,w,c)$ bounds propagator, that works as
follows. Let $G_S=\langle V, E_S \rangle$ be the current graph induced
by the $S$ variables, with $G$ the original graph. For simplicity, we
treat the graph as undirected. Our algorithm proceeds as follows
\begin{itemize}
\item Find a spanning tree of $G_S$, $M_S$, with the fixed edges in
  $S$ included.
\item Let $O$ be the set of edges with odd degree in $M_S$.
\item Find a \emph{maximal} matching in $G_S$ for the
  edges in $O$, and add to $M_S$.
\item For the nodes not matched in the previous step, find a matching
  using the edges in $G$ and add to $M_S$.
\item Construct an Euler circuit in $M_S$.
\item Following the Euler circuit, skip any node that has been used
  before with the corresponding edge in $E$, even if it is not in
  $E_S$.
\item Adjust the upper bound of $c$ to be at most the weight of the
  found circuit.
\end{itemize}

The above algorithm tries as far as possible to use only edges in the
graph $G_S$. If only such edges are used, then the upper bound
represents a solution to the sub-problem. Otherwise, the best
remaining tour may have a larger cost.

%\subsection{Implementation and evaluation}
\label{sec:christofides-impl}

\paragraph{Implementation} The implementation of the \cons{cbp} propagators follows the
outline above.
The spanning tree is found using Kruskals
algorithm~\cite{kruskal}. First all fixed edges are added to the
tree. After this, the edges in the graph are traversed in increasing
order. For this, our graphs keep a list of all the
edges in increasing weight order. If $|E_s|> \frac{1}{4}|E|$, then
this list is used with a filter to check for validity, otherwise a new
list is constructed from the current domains. The constant
$\frac{1}{4}$ was determined through experimentation, and needs to be
adjusted for a specific implementation.
Finding the Euler walk is done using Hierholzers
algorithm~\cite{hierholzer}, with the stack-based formulation.

The largest difference is that instead of the complete
minimum weight matching, a simple greedy algorithm is used
instead. This is because implementing and running a maximal matching
algorithm such as Edmonds algorithm~\cite{edmonds-matching} is both
complicated and time-consuming. An approximate solution here may give
a higher bound, but never a wrong one.

\section{Technique: Heuristic solutions}
\label{sec:hf-examples:solutions}

This is the most general technique, where heuristic algorithms are
used to make inferences and deductions that may or may not be true.

\subsection{Heuristic 1-tree propagation}
\label{sec:onetree}

As discussed in~\cite{HeldKarp71,Benchimol12,Isoart19}, a 1-tree is a
very useful structure for analysing properties of graphs when
searching for weighted Hamiltonian circuits.  Formally, a 1-tree for a
graph $G=\langle V E\rangle$ and a node $n_1$ is a spanning tree for
the graph
$\langle V\setminus \{n_1\}, E\setminus \{\langle n, n' \rangle |
n=n_1\lor n'=n_1\}\rangle$ along with a set of two edges from $n_1$ to
the rest of the graph:
$\{\langle n,n_1 \rangle, \langle n_1,n' \rangle \}$. A minimum 1-tree
is a 1-tree with minimum weight. Note that all circuits are 1-trees
for all the nodes in the graph as the selected node.

Our \cons{one-tree} propagator starts by finding a node to use as the
dedicated node, after which a 1-tree is computed. Three rules are
used: Update the lower bound of the cost with the cost of the 1-tree;
If the 1-tree is a circuit, set this as the solution; For some
node with degree $>2$ in the spanning tree part, remove the longest of
the incident edges. The latter idea is inspires by Held and
Karps~\cite{HeldKarp71} techniques from MIP formulations of the TSP
problem, where the residual costs of the edges in such nodes are
manipulated iteratively.

\label{sec:onetree-impl}

\paragraph{Implementation.} To find a 1-tree, the implementation uses an algorithm based on
Kruskals algorithm  similar to the implementation of
the spanning tree algorithm in~\ref{sec:christofides-impl}. The main
difference is that the special node $n_1$ is given as an additional
argument, and the algorithm returns a spanning tree for $V\setminus
\{n_1\}$, and two edges incident to $n_1$. First all fixed edges are
added, either to the spanning tree or to the $n_1$ edges. While
processing edges to build up the spanning tree, if an edge is incident
to $n_1$ add it to that set unless it already contains 2 edges. When
the spanning tree is constructed, we may still not have 2 edges in the
$n_1$ set, and if so add the smallest.
Note that the algorithm is only executed after normal propagation for
the circuit constraint has been done. Thus, we can assume that there
are at most 2 fixed edges incident to $n_1$.

\section{Evaluation}
\label{sec:evaluation}

\newlength{\cw}
\setlength{\cw}{1.2cm}

\begin{sidewaystable}
  \centerfloat
  \begin{tabular}{l p{0.2cm}p{\cw}p{\cw}p{\cw} cp{\cw}p{\cw}p{\cw} cp{\cw}p{\cw}p{\cw} cp{\cw}p{\cw}p{\cw}}
    \toprule 
               & & 
\multicolumn{3}{c}{\cons{wncl}} & & \multicolumn{3}{c}{\cons{cbp}} & & \multicolumn{3}{c}{\cons{one-tree}} & & \multicolumn{3}{c}{All} 
\\
\cmidrule(lr){3-5}
\cmidrule(lr){7-9}
\cmidrule(lr){11-13}
\cmidrule(lr){15-17}
    Instance
    & & $S$ & min & max  
    & & $S$ & min & max  
    & & $S$ & min & max  
    & & $S$ & min & max 
    \\
    \midrule
berlin52 &  & 99.54\% & $=$ & $=$     &  & $=$ & $=$ & 15.83\% &  & 99.95\% & 88.02\% & $=$    &  & 99.49\% & 88.02\% & 15.83\% \\ 
st70     &  & 99.87\% & $=$ & $=$     &  & $=$ & $=$ & 13.04\% &  & 99.97\% & 79.89\% & $=$    &  & 99.85\% & 79.89\% & 13.04\% \\
eil51    &  & 99.61\% & $=$ & $=$     &  & $=$ & $=$ & 17.39\% &  & 99.95\% & 90.57\% & $=$    &  & 99.57\% & 90.57\% & 17.39\% \\ 
eil76    &  & $=$     & $=$ & $=$     &  & $=$ & $=$ & 14.82\% &  & 99.98\% & 93.84\% & $=$    &  & 99.98\% & 93.84\% & 14.82\% \\
eil101   &  & 99.65\% & $=$ & $=$     &  & $=$ & $=$ & 11.72\% &  & 99.99\% & 93.78\% & $=$    &  & 99.63\% & 93.78\% & 11.72\% \\
lin105   &  & 99.84\% & $=$ & $=$     &  & $=$ & $=$ & 7.30\%  &  & 99.99\% & 61.94\% & $=$    &  & 99.83\% & 61.94\% & 7.30\%  \\
lin318   &  & $=$     & $=$ & $=$     &  & $=$ & $=$ & 4.87\%  &  & $=$     & 66.32\% & $=$    &  & $=$     & 66.32\% & 4.87\%  \\
pr76     &  & 99.89\% & $=$ & $=$     &  & $=$ & $=$ & 10.74\% &  & $=$     & 76.25\% & $=$    &  & 99.89\% & 76.25\% & 10.74\% \\ 
pr107    &  & $=$     & $=$ & $=$     &  & $=$ & $=$ & 5.83\%  &  & $=$     & 63.71\% & $=$    &  & $=$     & 63.71\% & 5.83\%  \\
pr124    &  & 99.58\% & $=$ & $=$     &  & $=$ & $=$ & 5.80\%  &  & $=$     & 73.48\% & $=$    &  & 99.58\% & 73.48\% & 5.80\%  \\
pr136    &  & 99.99\% & $=$ & $=$     &  & $=$ & $=$ & 8.08\%  &  & $\bot$  & $\bot$  & $\bot$ &  & $\bot$  & $\bot$  & $\bot$  \\
pr144    &  & $=$     & $=$ & $=$     &  & $=$ & $=$ & 5.50\%  &  & $=$     & 42.42\% & $=$    &  & $=$     & 42.42\% & 5.50\%  \\
pr152    &  & 99.40\% & $=$ & 99.49\% &  & $=$ & $=$ & 4.71\%  &  & $=$     & 58.93\% & $=$    &  & 99.40\% & 58.93\% & 4.71\%  \\
    
    \bottomrule\\
  \end{tabular}
  \caption{Filtering strength for the propagators. Reported is the
    reduction when using the propagators \cons{wncl}, \cons{cbp},
    \cons{one-tree}, and all combined on the domains size of $S$
    and the min and max cost after assigning 10\%. $=$ means no
    reduction, $\bot$ means a failure.}
  \label{tab:filtering-improvements}
\end{sidewaystable}

Our implementation\footnote{available at \url{https://github.com/zayenz/half-checking-propagators}}
is done using the Gecode~\cite{gecode} constraint programming system,
version 6.2.0. The main
constraint in the model is \cons{cost-circuit}, along with an
\cons{inverse} constraint to get variables representing the
predecessors also. The main branching heuristic used is the
Warnsdorff heuristic for selecting the variable to branch on, and for
values selecting the value with min weight (slightly
randomized). Instances are read from TSPLIB files.

The search uses a portfolio with several assets. Each asset runs a
restart-based search with a Luby-based restart schedule with a fairly
low scale, and no-goods are collected. The set-up with randomized
value selection and rapid restarts is inspired by~\cite{Archibald19}.

When a half-checking propagator is requested, it is placed in the last
asset, which declares itself to be an incomplete asset. For this
asset, no-good recording is also turned off, by modifying the branching
heuristic. Unfortunately, it is not possible in Gecode to known from
which asset a no-good is produced. The possibility to record no-goods
anyway is also included.

Our experiments are run on a a Macbook Pro 15 with a 6-core 2.7 GHz
Intel Core i7 processor and 16 GiB memory. The experiments
are not for deciding the best way to solve a TSP using constraint
programming, it is instead to demonstrate that the techniques adds
filtering power.

Computing the crossing lines data-structure from Section~\ref{sec:ncl}
quickly starts to
get costly. At around 50 nodes, it takes 0.25-0.3 seconds and at
around 100 nodes it takes 0.8-1.1 seconds. However, for
\texttt{lin318} with 318 nodes, it takes more than 5 minutes to
compute, which is clearly too long to be useful. In the following, we
will skip the full version since it is clearly impractical.

Table~\ref{tab:filtering-improvements} reports the filtering improvements for our
proposed propagators.  Five variants are run simultaneously, assigning
10\% of the nodes in the path. The variants use the standard model,
along with variants with the propagators \cons{wncl},
\cons{cbp}, \cons{one-tree}, and all three combined. The reported
value is the reduction in the sum domain size of the successor
variables $S$, and the adjustment of the minimum and maximum costs
compared with the standard model. As
can be seen, our propagators have complementary and strong filtering.

Finding good solutions quickly is naturally desired. Unfortunately,
the improved filtering does not translate into better solving
directly. For some test-cases, our propagators give modestly better
results for solving under time-limits. However, we believe that a main
issue is that it is not possible yet to generate no-goods local to an
asset in Gecode. Further investigation is clearly needed, as is
testing other problems using the \cons{cost-circuit} constraint.

\section{Related work}
\label{sec:related}

The requirement of correctness for propagators have been a constant in
constraint programming since the field began. Still, there are a few
techniques and approaches that have touched on similar ideas.

The most similar technique to half-checking propagators is
probably \emph{streamlining constraints}~\cite{Gomes04}. The original
idea is to post additional constraints in a model in order to focus on
certain subsets of solutions that exhibit some kinds of
regularities. Typically, these regularities are found examining
solutions to small instances, and the added streamliners help find
these regularities in larger instances. The idea is similar to
half-checking propagators, in that in order to solve a problem we may
want to rule out potential solutions. In a certain sense, the \cons{ncl}
no-crossing lines propagator is a streamliner constraint, since we
focus on the solutions that have the no-crossing lines regularity. On
the other hand, the Christofides bounds propagation (\cons{cbp}) and
the 1-tree propagation (\cons{one-tree}) we propose are not easily
formulated as streamliners. An additional difference is that
half-checking propagators focus on adding new reasoning for existing
constraints, while streamliner constraints focus on adding new
reasoning for models.

The similar approaches of cost propagation~\cite{GroheWedelin08} and
belief propagation~\cite{Pesant19} use a domain store that
indicates a common cost or belief for each variable-value pair. Both
approaches use the gathered information to guide the search (a
non-backtracking search for~\cite{GroheWedelin08}).  As remarked by
Pesant in~\cite{Pesant19} a value that gets a belief very close to
0 (or perhaps even 0, due to rounding errors), is very unlikely to be
in any solution, and thus it might be beneficial to actually prune these
values. Such a pruning rule would be a half-checking propagator.

In \cite{Benchimol12}, TSP instances tested are pre-processed with
tight bounds based on standard state-of-the-art heuristics. While it
is not clearly stated, this pre-processing is of course not valid if
there are any other constraints in the instances than just a
\cons{cost-circuit}. This kind of bounds updates is similar to what we
propose in Section~\ref{sec:hf-examples:bounds}, although we use it
continuously during search.

In~\cite{SellmannHarvey02}, Sellmann and Harvey propose using heuristic
constraint propagation. While it may sound similar to half-checking
propagators and especially the techniques we present, the crucial
difference it that Sellmann and Harvey focus on incomplete, but still
\emph{correct} propagation. 

\section{Conclusions}
\label{sec:conclusions}

This paper has introduced \emph{half-checking propagators}, a new
variant of propagators that are not required to be correct. Lifting
this restriction opens up new possibilities for designing propagation
algorithms. The goal is to guide search towards good solutions. To
regain completeness, we paired models with half-checking propagators
in a portfolio with standard models.

A detailed description on how to integrate half-checking propagators
into modern constraint programming systems was given. To showcase the
idea, three techniques for designing half-checking propagators were
presented and made concrete with an application to the
\cons{cost-circuit} constraint. 

\paragraph{Future work.} The most important future work is of course
to make computational studies on how to best use half-checking
propagators. In order to make this as fair as possible, an improvement
to Gecode that would allow us to record no-goods locally in assets
with half-checking propagators is needed.

There are many examples of hard problems, where half-checking
propagators could be useful. We think that scheduling problems may be
an interesting future area of research for this. Also, studying
automatically generated streamliner
constraints~\cite{Wetter15,Spracklen19} could be an interesting source
of ideas for new half-checking propagators.

\section*{Acknowledgments}
Thanks to the anonymous reviewers of this paper, which helped improve
and clarify many points and recommended additional references.

\bibliographystyle{splncs04}
\bibliography{references}

\begin{thebibliography}{10}
\providecommand{\url}[1]{\texttt{#1}}
\providecommand{\urlprefix}{URL }
\providecommand{\doi}[1]{https://doi.org/#1}

\bibitem{ozgur18}
Akg{\"u}n, {\"O}., Gent, I.P., Jefferson, C., Miguel, I., Nightingale, P.:
  Metamorphic testing of constraint solvers. In: Hooker, J. (ed.) Principles
  and Practice of Constraint Programming. pp. 727--736. Springer International
  Publishing, Cham (2018)

\bibitem{Archibald19}
Archibald, B., Dunlop, F., Hoffmann, R., McCreesh, C., Prosser, P., Trimble,
  J.: Sequential and parallel solution-biased search for subgraph algorithms.
  In: Rousseau, L., Stergiou, K. (eds.) Integration of Constraint Programming,
  Artificial Intelligence, and Operations Research - 16th International
  Conference, {CPAIOR} 2019, Thessaloniki, Greece, June 4-7, 2019, Proceedings.
  Lecture Notes in Computer Science, vol. 11494, pp. 20--38. Springer (2019).
  \doi{10.1007/978-3-030-19212-9\_2},
  \url{https://doi.org/10.1007/978-3-030-19212-9\_2}

\bibitem{Beldiceanu94}
Beldiceanu, N., Contejean, E.: Introducing global constraints in chip.
  Mathematical and Computer Modelling  \textbf{20}(12),  97 -- 123 (1994).
  \doi{https://doi.org/10.1016/0895-7177(94)90127-9},
  \url{http://www.sciencedirect.com/science/article/pii/0895717794901279}

\bibitem{Benchimol12}
Benchimol, P., van Hoeve, W.J., R{\'{e}}gin, J., Rousseau, L., Rueher, M.:
  Improved filtering for weighted circuit constraints. Constraints
  \textbf{17}(3),  205--233 (2012). \doi{10.1007/s10601-012-9119-x},
  \url{https://doi.org/10.1007/s10601-012-9119-x}

\bibitem{sicstus}
Carlsson, M., Ottosson, G., Carlson, B.: An open-ended finite domain constraint
  solver. In: Glaser, H., Hartel, P., Kuchen, H. (eds.) Programming Languages:
  Implementations, Logics, and Programs. pp. 191--206. Springer Berlin
  Heidelberg, Berlin, Heidelberg (1997)

\bibitem{christofides}
Christofides, N.: Worst-case analysis of a new heuristic for the traveling
  salesman problem. Tech. Rep.~388, Graduate School of Industrial
  Administration, Carnegie Mellon University (1976)

\bibitem{concorde}
Cook, W.: Concorde {TSP} {S}olver,
  \url{http://www.math.uwaterloo.ca/tsp/concorde.html}

\bibitem{edmonds-matching}
Edmonds, J.: Paths, trees, and flowers. Canadian Journal of Mathematics
  \textbf{17},  449--467 (1965)

\bibitem{Fages16}
Fages, J.G., Lorca, X., Rousseau, L.M.: The salesman and the tree: the
  importance of search in cp. Constraints  \textbf{21}(2),  145--162 (Apr
  2016). \doi{10.1007/s10601-014-9178-2},
  \url{https://doi.org/10.1007/s10601-014-9178-2}

\bibitem{matrixsymmetry}
Flener, P., Frisch, A.M., Hnich, B., Kiziltan, Z., Miguel, I., Pearson, J.,
  Walsh, T.: Breaking row and column symmetries in matrix models. In:
  Principles and Practice of Constraint Programming - {CP} 2002, 8th
  International Conference, {CP} 2002, Ithaca, NY, USA, September 9-13, 2002,
  Proceedings. pp. 462--476 (2002). \doi{10.1007/3-540-46135-3\_31}

\bibitem{Fontaine16}
Fontaine, D., Michel, L., Van~Hentenryck, P.: Parallel composition of
  scheduling solvers. In: Quimper, C.G. (ed.) Integration of AI and OR
  Techniques in Constraint Programming. pp. 159--169. Springer International
  Publishing, Cham (2016)

\bibitem{Francis14}
Francis, K.G., Stuckey, P.J.: Explaining circuit propagation. Constraints
  \textbf{19}(1),  1--29 (Jan 2014). \doi{10.1007/s10601-013-9148-0},
  \url{https://doi.org/10.1007/s10601-013-9148-0}

\bibitem{gecode}
{Gecode team}: {G}ecode, the generic constraint development environment (2018),
  \url{http://www.gecode.org/}

\bibitem{solvercheck}
Gillard, X., Schaus, P., Deville, Y.: Solvercheck: Declarative testing of
  constraints. In: Schiex and de~Givry  \cite{DBLP:conf/cp/2019}, pp. 565--582.
  \doi{10.1007/978-3-030-30048-7\_33},
  \url{https://doi.org/10.1007/978-3-030-30048-7\_33}

\bibitem{Gomes04}
Gomes, C.P., Sellmann, M.: Streamlined constraint reasoning. In: Wallace, M.
  (ed.) Principles and Practice of Constraint Programming - {CP} 2004, 10th
  International Conference, {CP} 2004, Toronto, Canada, September 27 - October
  1, 2004, Proceedings. Lecture Notes in Computer Science, vol.~3258, pp.
  274--289. Springer (2004). \doi{10.1007/978-3-540-30201-8\_22},
  \url{https://doi.org/10.1007/978-3-540-30201-8\_22}

\bibitem{ortools}
{Google}: {OR-Tools} (2019), \url{https://developers.google.com/optimization}

\bibitem{GroheWedelin08}
Grohe, B., Wedelin, D.: Cost propagation -- numerical propagation for
  optimization problems. In: Perron, L., Trick, M.A. (eds.) Integration of AI
  and OR Techniques in Constraint Programming for Combinatorial Optimization
  Problems. pp. 97--111. Springer Berlin Heidelberg, Berlin, Heidelberg (2008)

\bibitem{HeldKarp71}
Held, M., Karp, R.M.: The traveling-salesman problem and minimum spanning
  trees: Part ii. Mathematical Programming  \textbf{1}(1),  6--25 (Dec 1971).
  \doi{10.1007/BF01584070}, \url{https://doi.org/10.1007/BF01584070}

\bibitem{hierholzer}
Hierholzer, C., Wiener, C.: Ueber die möglichkeit, einen linienzug ohne
  wiederholung und ohne unterbrechung zu umfahren. Mathematische Annalen
  \textbf{6}(1),  30--32 (1873)

\bibitem{cpoptimizer}
{IBM}: {IBM ILOG CP Optimizer} (2019),
  \url{https://www.ibm.com/analytics/cplex-cp-optimizer}

\bibitem{Isoart19}
Isoart, N., R{\'{e}}gin, J.: Integration of structural constraints into {TSP}
  models. In: Schiex and de~Givry  \cite{DBLP:conf/cp/2019}, pp. 284--299.
  \doi{10.1007/978-3-030-30048-7\_17},
  \url{https://doi.org/10.1007/978-3-030-30048-7\_17}

\bibitem{Karp1972}
Karp, R.M.: Reducibility among Combinatorial Problems, pp. 85--103. Springer
  US, Boston, MA (1972). \doi{10.1007/978-1-4684-2001-2\_9}

\bibitem{Katsirelos05}
Katsirelos, G., Bacchus, F.: Generalized nogoods in csps. In: Veloso, M.M.,
  Kambhampati, S. (eds.) Proceedings, The Twentieth National Conference on
  Artificial Intelligence and the Seventeenth Innovative Applications of
  Artificial Intelligence Conference, July 9-13, 2005, Pittsburgh,
  Pennsylvania, {USA}. pp. 390--396. {AAAI} Press / The {MIT} Press (2005),
  \url{http://www.aaai.org/Library/AAAI/2005/aaai05-062.php}

\bibitem{kruskal}
Kruskal, J.: On the shortest spanning subtree of a graph and the traveling
  salesman problem. Proceedings of the American Mathematical Society
  \textbf{7}(1),  48--50 (1956)

\bibitem{Lauriere78}
Lauriere, J.L.: A language and a program for stating and solving combinatorial
  problems. Artificial Intelligence  \textbf{10}(1),  29 -- 127 (1978).
  \doi{https://doi.org/10.1016/0004-3702(78)90029-2},
  \url{http://www.sciencedirect.com/science/article/pii/0004370278900292}

\bibitem{Lee16}
Lee, J.H.M., Schulte, C., Zhu, Z.: Increasing nogoods in restart-based search.
  In: Schuurmans, D., Wellman, M. (eds.) AAAI Conference on Artificial
  Intelligence. pp. 3426--3433. AAAI Press, Phoenix, AZ, USA (Feb 2016),
  \url{https://chschulte.github.io/papers/leeschulteea-aaai-2016.html}

\bibitem{STR}
Leutenegger, S., Lopez, M., Edgington, J.: {STR}: A simple and efficient
  algorithm for r-tree packing. pp. 497--506 (05 1997).
  \doi{10.1109/ICDE.1997.582015}

\bibitem{mackworth77}
Mackworth, A.: Consistency in networks of relations. Artificial Intelligence
  \textbf{8}(1),  99--118 (1977)

\bibitem{LDSB}
Mears, C., de~la Banda, M.G., Demoen, B., Wallace, M.: Lightweight dynamic
  symmetry breaking. Constraints  \textbf{19}(3),  195--242 (2014).
  \doi{10.1007/s10601-013-9154-2},
  \url{https://doi.org/10.1007/s10601-013-9154-2}

\bibitem{lcg}
Ohrimenko, O., Stuckey, P., Codish, M.: Propagation = lazy clause generation.
  In: Bessiere, C. (ed.) Proceedings of the 13th International Conference on
  Principles and Practice of Constraint Programming,. LNCS, vol.~4741, pp.
  544--558 (2007)

\bibitem{Pesant19}
Pesant, G.: From support propagation to belief propagation in constraint
  programming. J. Artif. Intell. Res.  \textbf{66},  123--150 (2019).
  \doi{10.1613/jair.1.11487}, \url{https://doi.org/10.1613/jair.1.11487}

\bibitem{choco}
Prud'homme, C., Fages, J.G., Lorca, X.: Choco Documentation. TASC - LS2N CNRS
  UMR 6241, COSLING S.A.S. (2017), \url{http://www.choco-solver.org}

\bibitem{tsplib}
Reinelt, G.: Tsplib - a traveling salesman problem library. INFORMS Journal on
  Computing  \textbf{3}(4),  376--384 (1991),
  \url{http://dblp.uni-trier.de/db/journals/informs/informs3.html#Reinelt91}

\bibitem{Reischuk09}
Reischuk, R.M., Schulte, C., Stuckey, P.J., Tack, G.: Maintaining state in
  propagation solvers. In: Gent, I.P. (ed.) Principles and Practice of
  Constraint Programming - CP 2009. pp. 692--706. Springer Berlin Heidelberg,
  Berlin, Heidelberg (2009)

\bibitem{DBLP:conf/cp/2019}
Schiex, T., de~Givry, S. (eds.): Principles and Practice of Constraint
  Programming - 25th International Conference, {CP} 2019, Stamford, CT, USA,
  September 30 - October 4, 2019, Proceedings, Lecture Notes in Computer
  Science, vol. 11802. Springer (2019). \doi{10.1007/978-3-030-30048-7},
  \url{https://doi.org/10.1007/978-3-030-30048-7}

\bibitem{SchulteTack:CP:2009}
Schulte, C., Tack, G.: Weakly monotonic propagators. In: Gent, I. (ed.)
  Fifteenth International Conference on Principles and Practice of Constraint
  Programming. Lecture Notes in Computer Science, vol.~5732, pp. 723--730.
  Springer-Verlag, Lisbon, Portugal (Sep 2009).
  \doi{10.1007/978-3-642-04244-7\_56},
  \url{https://chschulte.github.io/papers/schultetack-cp-2009.html}

\bibitem{SellmannHarvey02}
Sellmann, M., Harvey, W.: Heuristic constraint propagation. In: Hentenryck,
  P.V. (ed.) Eighth International Conference on Principles and Practice of
  Constraint Programming. Lecture Notes in Computer Science, vol.~2470, pp.
  738--743. Springer-Verlag, Ithaca, NY, USA (Sep 2002).
  \doi{https://doi.org/10.1007/3-540-46135-3\_55}

\bibitem{lns}
Shaw, P.: Using constraint programming and local search methods to solve
  vehicle routing problems. In: Maher, M., Puget, J.F. (eds.) Fourth
  International Conference on Principles and Practice of Constraint
  Programming. Lecture Notes in Computer Science, vol.~1520, pp. 417--431.
  Springer-Verlag, Pisa, Italy (1998)

\bibitem{Spracklen19}
Spracklen, P., Dang, N., Akg{\"{u}}n, {\"{O}}., Miguel, I.: Automatic
  streamlining for constrained optimisation. In: Schiex, T., de~Givry, S.
  (eds.) Principles and Practice of Constraint Programming - 25th International
  Conference, {CP} 2019, Stamford, CT, USA, September 30 - October 4, 2019,
  Proceedings. Lecture Notes in Computer Science, vol. 11802, pp. 366--383.
  Springer (2019). \doi{10.1007/978-3-030-30048-7\_22},
  \url{https://doi.org/10.1007/978-3-030-30048-7\_22}

\bibitem{failure-directed-search}
Vil\'{\i}m, P., Laborie, P., Shaw, P.: Failure-directed search for
  constraint-based scheduling. In: CPAIOR '15: Proceedings of the 12th
  International Conference on Integration of AI and OR Techniques in Constraint
  Programming for Combinatorial Optimization Problems. Springer-Verlag (2015)

\bibitem{wan1823}
von Warnsdorff, H.C.: Des rösselsprungs einfachste und allgemeinste lösung.
  Th. G. Fr. Varnhagensehen Buchhandlung  (1823)

\bibitem{Wetter15}
Wetter, J., Akg{\"{u}}n, {\"{O}}., Miguel, I.: Automatically generating
  streamlined constraint models with essence and conjure. In: Pesant, G. (ed.)
  Principles and Practice of Constraint Programming - 21st International
  Conference, {CP} 2015, Cork, Ireland, August 31 - September 4, 2015,
  Proceedings. Lecture Notes in Computer Science, vol.~9255, pp. 480--496.
  Springer (2015). \doi{10.1007/978-3-319-23219-5\_34},
  \url{https://doi.org/10.1007/978-3-319-23219-5\_34}

\end{thebibliography}

\end{document}